
\documentclass[letterpaper, 10 pt, conference]{ieeeconf}  

\IEEEoverridecommandlockouts                              

\overrideIEEEmargins                                      


\usepackage{graphics} 
\usepackage{epsfig} 
\usepackage{amsmath} 
\usepackage{amssymb}  
\usepackage{tikz}
\usepackage{caption}
\usepackage{subcaption}
\usepackage{url}
\usetikzlibrary{calc, arrows.meta}

\newcommand{\dfb}{\stackrel{\Delta}{=}}
\newcommand{\R}{\ensuremath{\mathbb R}}

\newtheorem{theorem}{\textbf{Theorem}}[section]
\newtheorem{remark}[theorem]{Remark}
\newtheorem{lemma}[theorem]{\textbf{Lemma}}
\newtheorem{assumption}[theorem]{\textbf{Assumption}}

\title{\LARGE \bf
Circular formation control of fixed-wing UAVs with constant speeds.
}

\author{Hector Garcia de Marina$^{1}$ Zhiyong Sun$^{2}$ Murat Bronz$^{1}$ and Gautier Hattenberger$^{1}$
\thanks{$^{1}$H. Garcia de Marina, M. Bronz and G. Hattenberger are with the Ecole National de l'Aviation Civile, University of Toulouse, Toulouse, France. {\tt\small hgdemarina@ieee.org}.}
\thanks{$^{2}$Z. Sun is with the Australian National University, Canberra.  {\tt\small zhiyong.sun@anu.edu.au}.}%
}

\begin{document}

\maketitle
\thispagestyle{empty}
\pagestyle{empty}

\begin{abstract}
In this paper we propose an algorithm for stabilizing circular formations of fixed-wing UAVs with constant speeds. The algorithm is based on the idea of tracking circles with different radii in order to control the inter-vehicle phases with respect to a target circumference. We prove that the desired equilibrium is exponentially stable and thanks to the guidance vector field that guides the vehicles, the algorithm can be extended to other closed trajectories. One of the main advantages of this approach is that the algorithm guarantees the confinement of the team in a specific area, even when communications or sensing among vehicles are lost. We show the effectiveness of the algorithm with an actual formation flight of three aircraft. The algorithm is ready to use for the general public in the open-source Paparazzi autopilot.
\end{abstract}

\section{INTRODUCTION}
There is a growing body of literature that recognises the importance of Unmanned Aerial Vehicles (UAVs) for aerial surveillance, search and rescue, patrol, inspection of facilities, precision agriculture, and atmosphere study \cite{beard2006decentralized,goodrich2008supporting,tokekar2016sensor
}.
The current trend in multi-agent systems \cite{olfati2007consensus} has led to a proliferation of works where individual and independent UAVs have gradually been replaced by a team of cooperative and coordinated ones \cite{ryan2004overview,bertuccelli2004robust,wang2007cooperative,stipanovic2004decentralized}. Formation control is one of the most well-known tools for assessing the cooperation and coordination of multi-agent systems \cite{oh2015survey,anderson2008uav}, where the usage of such a tool aims at forming and maintaining a prescribed geometrical shape for a group of vehicles.

An important challenge in the formation control of UAVs is to consider demanding constraints for the vehicles. For example, fixed-wing UAVs are preferred to rotorcrafts in missions where the endurance or the capacity for traveling long distances are essential requirements. However, rather than point mass models \cite{montenbruck2016fekete,lee2016distributed}, guidance algorithms for fixed-wing UAVs have to consider nonholonomic constraints, e.g., by modeling the aircraft as unicycles. Theoretical contributions on the coordination and formation control of unicycles include the consensus and rendezvous \cite{lin2005necessary,dimarogonas2007rendezvous}, and the circular formations \cite{marshall2006pursuit,el2013distributed}. The circular or pursuit formation design is a practical method for steering the vehicles to coordinated and periodic trajectories, which provide means to sample data with a desired spatial and temporal separation \cite{leonard2010coordinated}. However, all the above mentioned works consider that both the speed and orientation of the vehicles can be actuated by the formation control algorithm in order to accomplish the mission's goal.

Fixed-wing aircraft are usually designed for flying most efficiently at a given fixed airspeed \cite{anderson2010fundamentals}. Furthermore, an aircraft needs to fly with an airspeed over a certain lower bound or otherwise the aircraft stalls and falls down. Consequently, the control problem to be discussed in this paper considers unicycle-like vehicles with constant speed, i.e., we only actuate on the steering of the vehicle via coordinated turns by actuating on the bank angle of the aircraft. Note that having a constant airspeed does not imply to have a constant ground-speed because of the effect of the wind. Therefore, the wind causes the aircraft to travel with different ground speeds depending on its yaw and heading angles with respect to some frame of coordinates fixed on the ground, e.g., at center of the circular formation. Nevertheless, in this paper it will be assumed that the speed of the wind is much smaller than the desired airspeed, so the ground-speed can be considered almost constant during the vehicle's mission.

Circular formations for unicycles with the constraint of having constant speeds 
make the formation control problem more challenging as it has been shown in the early work in \cite{justh2004equilibria} for just two vehicles. In particular, the analysis of constantly moving vehicles is problematic if one wants to control geometrical relations between them. However, a clever strategy consisting in controlling the instantaneous center of rotation of the vehicle, instead of its position, for a given fixed angular velocity has been employed in different works \cite{sepulchre2007stabilization,sepulchre2008stabilization,seyboth2014collective,brinon2014cooperative,sun2015collective}. The benefit of this approach is that such instantaneous center of rotation for a \emph{unit speed} vehicle can be fixed with respect to some global frame of coordinates while the vehicle is still moving, i.e., it just circulates around a constant point. The cited works applying this approach include the rendezvous of such points or the control of geometric relations between them such as position- or distance-based formation control approaches \cite{oh2015survey}. However, there are certain drawbacks associated with the control of such centers of rotation. For example, it is common that during the mission the steering control action of a vehicle could be close to or exactly zero. For such a situation, if communications or lines of sight for the sensors are lost, then the control action of the vehicle is not updated since the steering  depends only on the relative states with respect to its neighbors, and consequently, the aircraft will continue flying straight. This is clearly a problem since nowadays the restrictions for drones in the airspace are tight in many countries. Therefore, the flight plan has to guarantee that the fixed-wing UAVs will not abandon their designated flying area.

In this paper we propose a distributed algorithm for controlling circular formations of fixed-wing UAVs with constant speeds, where each vehicle has a (feasible) prescribed inter-position with respect to its neighbors on the circle. In particular, the formation control algorithm does not directly actuate on steering the vehicle but setting the radius of the circle to be tracked by the vehicle, i.e., we actuate on the angular velocity of the vehicle around the center point of the circle. This approach can be related to algorithms where the agents are exclusively confined on the target circumference and they can change their phases \cite{wang2013forming,bullo2009distributed}. While several works controlling the inter-vehicle phases have considered constraints such as \emph{anonymity}, restrictions in the communications or order preservation, to the best of the authors' knowledge, only the work by Wang \emph{et al} \cite{wang2014controlling} addresses the constraint of having vehicles that cannot move \emph{backwards}. However, such condition implies that the vehicles can be stopped on the circumference, something impossible for a fixed-wing aircraft.

The proposed algorithm in this work has a series of advantages and features. First, it is distributed, i.e., the vehicles depend only on relative measurements, such as relative positions, with respect to their neighbors, and a complete graph is not necessary. Second, unlike many of the above cited works, the desired inter-vehicle angles on the circle can be prescribed. Third, the desired formation is exponentially stable. Fourth, an arbitrary bound of the maximum distance of the vehicles with respect to the center of the circle can be chosen by design, therefore it is guaranteed the confinement of the formation regardless of broken communications or sensing. Fifth, it is possible to extend the algorithm to not only circles, but at least to any smooth closed orbit homeomorphic to a circle.

The remaining parts of the paper are organized as follows. In Section \ref{sec: bac}, the notation, the considered model of the fixed-wing aircraft and the employed trajectory tracking algorithm for following a circle are introduced. In Section \ref{sec: cf} we state the circular formation problem and propose the design of a controller as a solution together with a stability analysis. Experimental results with actual aircraft are presented in Section \ref{sec: exp}. The algorithm has been implemented in the popular open-source autopilot Paparazzi \cite{hattenberger2014using} and it is ready to be used by the general public.

\section{Preliminaries}
\label{sec: bac}

\subsection{Notation}
Consider a formation of $n\geq 2$ fixed-wing UAVs whose positions are denoted by $p_i\in\R^2$ with $i\in\{1,\dots,n\}$. The vehicles are able to sense the relative positions with respect to their neighbors. The neighbors' relationships are described by an undirected graph $\mathbb{G} = (\mathcal{V}, \mathcal{E})$ with the vertex set $\mathcal{V} = \{1, \dots, n\}$ and the ordered edge set $\mathcal{E}\subseteq\mathcal{V}\times\mathcal{V}$. The set $\mathcal{N}_i$ of the neighbors of vehicle $i$ is defined by $\mathcal{N}_i\dfb\{j\in\mathcal{V}:(i,j)\in\mathcal{E}\}$. Two vertices are \emph{adjacent} if $(i,j)\in\mathcal{E}$. A \emph{path} from a vertex $i$ to a vertex $j$ is a sequence starting at $i$ and ending at $j$ such that two consecutive vertices are adjacent, and if $i=j$ then the path is called a \emph{cycle}. We assume that the graph $\mathcal{G}$ is \emph{connected}, i.e., there is a path between any two vertices $i$ and $j$. We define the elements of the incidence matrix $B\in\R^{|\mathcal{V}|\times|\mathcal{E}|}$, where $|\mathcal{X}|$ denotes the cardinality of the set $\mathcal{X}$, for $\mathbb{G}$ by
\begin{equation}
	b_{ik} \dfb \begin{cases}+1 \quad \text{if} \quad i = {\mathcal{E}_k^{\text{tail}}} \\
		-1 \quad \text{if} \quad i = {\mathcal{E}_k^{\text{head}}} \\
		0 \quad \text{otherwise}
	\end{cases},
	\label{eq: B}
\end{equation}
where $\mathcal{E}_k^{\text{tail}}$ and $\mathcal{E}_k^{\text{head}}$ denote the tail and head nodes, respectively, of the edge $\mathcal{E}_k$, i.e., $\mathcal{E}_k = (\mathcal{E}_k^{\text{tail}},\mathcal{E}_k^{\text{head}})$.

A circular trajectory with radius $r\in\R^+$ can be described by the following implicit equation
\begin{equation}
	\mathcal{C}_r \dfb \{p \,:\, \varphi(p) =  0\},
	\label{eq: cr}
\end{equation}
where $\varphi(p) = p_x^2 + p_y^2 - r^2$ and $p = \begin{bmatrix}p_x & p_y \end{bmatrix}^T$ is a Cartesian position with respect to a frame of coordinates whose origin  is at the center of $\mathcal{C}_r$. The plane $\R^2$ can be covered by the following disjoint sets $\varphi_c(p) \dfb \varphi(p) = c \in\R$, where each \emph{level set} is defined for a particular value of $c$ such that the resulting circle's radius is non-negative, and in particular, the \emph{zero level set} corresponds uniquely to $\mathcal{C}_r$. We define by $n(p)\dfb\nabla\varphi(p)$ the normal vector to the circle corresponding to the level set $\varphi(p)$ and the tangent vector at the same point $p$ is given by the rotation
\begin{equation}
	\tau(p) = En(p), \quad E=\begin{bmatrix}0 & 1 \\ -1 & 0\end{bmatrix}. \nonumber
\end{equation}
Note that $\mathcal{C}_r$ belongs to the $C^2$ space and it is regular everywhere excepting at its center, i.e.,
\begin{equation}
	\nabla\varphi(p) \neq 0 \iff p\neq 0,
\end{equation}
and all the level sets $\varphi_c(p)$ can be parametrized. In particular, the vehicle $i$ can calculate such 
parametrization associated to its position with the following expression
\begin{equation}
	\theta_i(p) = \operatorname{atan2}(p_y,p_x)\in(-\pi,\pi].
	\label{eq: thetai}
\end{equation}
Note that $\theta_i(p)$ belongs to the circle group $\mathbb{S}^1$.

\subsection{Fixed-wing aircraft's model}
Consider for the \emph{unit speed} $i$'th fixed-wing aircraft the following nonholonomic model in 2D
\begin{equation}
\begin{cases}
	\dot p_i &= m(\psi_i) \\
	\dot\psi_i &= u_{\psi_i},
\end{cases}
	\label{eq: pdyn}
\end{equation}
where $m = \begin{bmatrix}\cos(\psi_i) & \sin(\psi_i)\end{bmatrix}^T$ with $\psi_i\in(-\pi, \pi]$ being the attitude \emph{yaw} angle\footnote{For our setup, the yaw angle and heading angle can be considered equal due to the absence of wind.} and $u_{\psi_i}$ is the control action that will make the aircraft to turn. In particular, for coordinated turns where the altitude of the vehicle is kept constant and the pitch angle is close to zero, the control action $u_{\psi_i}$ corresponds to the following bank angle $\phi_i$ to be tracked by the autopilot of the vehicle
\begin{equation}
	\phi_i = \operatorname{arctan}\frac{u_{\psi_i}}{g},
\end{equation}
where $g$ is the gravity acceleration.

\subsection{Trajectory tracking}
One of the key points of the proposed formation control algorithm in this paper is to make sure that the aircraft is tracking $\mathcal{C}_r$. There exist many guidance algorithms in the literature \cite{kendoul2012survey,sujit2014unmanned}. We have chosen the algorithm proposed in \cite{YuriCS} that has been successfully tested in real flights \cite{de2016guidance} for two reasons. Firstly, the local exponential converge to the desired path is guaranteed. This property will help us later to support the convergence of the formation control algorithm under the argument of \emph{slow-fast} dynamical systems in cascade. Secondly, the algorithm can be straightforwardly extended to other $C^2$ curves that are homeomorphic to $\mathcal{C}_r$, such as ellipses or the (possibly concave curve) Cassini ovals.

The trajectory tracking algorithm employs the level sets $e(p)\dfb\varphi(p)$ for the notion of \emph{error distance} between the aircraft and $\mathcal{C}_r$. Note that for circular paths, the error $e$ has a clear relation with the Euclidean distance, but for more general trajectories, such as ellipses, this is not always true. The vehicle has to follow the vector field defined by
\begin{equation}
	\dot p_d(p) \dfb \tau(p) - k_e e(p)n(p),
	\label{eq: gvf}
\end{equation}
where $k_e\in\R^+$ is a gain that defines how \emph{aggressive} the vector field is, in order to converge to traveling on $\mathcal{C}_r$. Let us define $\hat x$ as the unit vector constructed from the nonzero vector $x$.
\begin{theorem}
	\cite{YuriCS,de2016guidance} Consider the system (\ref{eq: pdyn}), then the control action
\begin{align}
	u_\psi &= -\left(E\hat{\dot p}_d\hat{\dot p}_d^TE\left((E-k_ee)H(\varphi)\dot p - k_en^T\dot pn\right)\right)^TE\frac{\dot p_d}{||\dot p_d||^2} \nonumber \\
	&+ k_d\hat{\dot p}^TE\hat{\dot p}_d,
	\label{eq: ui}
\end{align}
	where $H(\cdot)$ is the Hessian operator and $k_d\in\R^+$, makes the aircraft to converge (locally) exponentially fast to travel over $\mathcal{C}_r$, i.e., for $|e(0)|\leq c^*$ we have that $|e(t)| \leq a\operatorname{exp}(-bt)$ with $t\to\infty$ for some constants $a,b,c^*\in\R^+$.
	\label{th: tr}
\end{theorem}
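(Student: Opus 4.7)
The plan is to introduce two scalar error coordinates that separate the geometry of the path from the kinematics of the heading. Take the path error $e(p)=\varphi(p)$ and an orientation error $\chi$ defined as the signed angle between the aircraft's unit velocity $m(\psi)$ and the unit guidance vector $\hat{\dot p}_d(p)$; equivalently, $\sin\chi = m(\psi)^T E\hat{\dot p}_d(p)$ and $\cos\chi = m(\psi)^T\hat{\dot p}_d(p)$. A preliminary step is to show that $\hat{\dot p}_d$ is well-defined on a neighbourhood of $\mathcal{C}_r$: writing $\dot p_d = (E - k_e e\, I)n$ and noting that $(E-k_e e\,I)$ has determinant $1+k_e^2e^2>0$ shows $\dot p_d$ vanishes only at the origin, so $\|\dot p_d\|$ is uniformly bounded below whenever $|e|$ is small. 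This bound will ultimately provide the constant $c^*$ in the statement.

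The next step is to derive the two error dynamics and identify the role of each term of (\ref{eq: ui}). For the path error, the chain rule gives $\dot e = n(p)^T m(\psi)$; writing $m$ in the moving frame $\{\hat{\dot p}_d, E\hat{\dot p}_d\}$ and using $n^T\tau=0$ yields
\begin{equation*}
\dot e = -\frac{k_e\|n\|^2}{\|\dot p_d\|}\,e\cos\chi + \|n\|\,\gamma(p)\sin\chi,
\end{equation*}
with $\gamma$ smooth and bounded near $\mathcal{C}_r$. For the orientation error, direct differentiation gives $\dot\chi = u_{\psi} - \omega_d$, where $\omega_d$ is the rate of rotation of $\hat{\dot p}_d$ along the aircraft's trajectory. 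The key algebraic fact is that the feedforward term in (\ref{eq: ui}) is precisely $\omega_d$: differentiating $\dot p_d = En - k_e e n$ along the flow produces $\ddot p_d = (E-k_e e)H(\varphi)\dot p - k_e(n^T\dot p)n$, which is exactly the bracketed quantity in (\ref{eq: ui}); combined with the planar identity expressing the angular rate of a non-vanishing vector $w$ as $\pm (E\hat w)^T\dot w/\|w\|$, the feedforward half of (\ref{eq: ui}) collapses to $\omega_d$, leaving $\dot\chi = -k_d\sin\chi$.

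Having put the error system in the cascade form
\begin{equation*}
\dot e = -\lambda(p)\,e\cos\chi + \mu(p)\sin\chi,\qquad \dot\chi = -k_d\sin\chi,
\end{equation*}
with $\lambda(p)>0$ on a neighbourhood of $\mathcal{C}_r$ and $k_d>0$, local exponential stability of $(e,\chi)=(0,0)$ follows from Lyapunov's indirect method: the Jacobian at the equilibrium is lower triangular with strictly negative diagonal entries $-\lambda|_{\mathcal{C}_r}$ and $-k_d$, hence Hurwitz. A quantitative estimate of the basin and of the constants $a,b,c^*$ can be extracted from the composite Lyapunov function $V=\tfrac12 e^2+\frac{\rho}{k_d}(1-\cos\chi)$ with $\rho>0$ chosen large enough to dominate the cross term $\mu(p)\,e\sin\chi$; on a sub-level set $\{V\le c^{*2}/2\}$ one obtains $\dot V\le -c_1 e^2 - c_2(1-\cos\chi)$, which delivers the required bound $|e(t)|\le a\exp(-bt)$.

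The main obstacle is not the Lyapunov decrease but the algebraic verification that the feedforward half of (\ref{eq: ui}) really reproduces $\omega_d$, with all the sign conventions of $E$ and $\psi$ faithfully tracked; once that cancellation is established the proof becomes a textbook application of cascade stability. A secondary, more delicate point is the explicit characterisation of $c^*$: one must restrict to a region where $\|\dot p_d\|$ is bounded below and $|\chi|<\pi/2$, so that $\cos\chi$ remains positive and the path-error equation keeps its dissipative structure.
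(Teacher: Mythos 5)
The paper does not prove this theorem at all: it is imported verbatim from \cite{YuriCS,de2016guidance}, so there is no in-paper argument to compare against. Judged on its own, your reconstruction is the standard proof of the guiding-vector-field result and its structure is sound: the change of coordinates to the pair $(e,\chi)$, the observation that $\dot p_d=(E-k_ee I)n$ with $\det(E-k_ee I)=1+k_e^2e^2>0$ so that $\|\dot p_d\|$ is bounded below near $\mathcal{C}_r$, the identification of the bracketed quantity with $\tfrac{d}{dt}\dot p_d$ along the flow, the resulting triangular error system $\dot e=-\lambda e\cos\chi+\mu\sin\chi$, $\dot\chi=-k_d\sin\chi$, and the conclusion by linearization or by the composite Lyapunov function. (One can check explicitly that $n^T\dot p_d=-k_ee\|n\|^2$ and $n^TE\dot p_d=-\|n\|^2$, which gives your $\lambda=k_e\|n\|^2/\|\dot p_d\|$ and $\mu=-\|n\|^2/\|\dot p_d\|$; on the circle these depend on $e$ alone by rotational symmetry, so the reduced $(e,\chi)$ system is genuinely autonomous, a point worth stating since for general curves $\lambda,\mu$ are time-varying along the orbit and only the Lyapunov route applies.) Two small remarks: the Jacobian at the origin is upper, not lower, triangular (harmless, the eigenvalues $-\lambda|_{\mathcal{C}_r}$ and $-k_d$ are the diagonal either way); and you are right to flag the sign bookkeeping of the feedforward term as the delicate step --- working through the formula exactly as printed here, with $E=\bigl[\begin{smallmatrix}0&1\\-1&0\end{smallmatrix}\bigr]$, the first term evaluates to $-\hat{\dot p}_d^{\,T}E\ddot p_d/\|\dot p_d\|$, and whether this equals $+\omega_d$ or $-\omega_d$ hinges on conventions (and possibly a transcription slip in the conference version of the formula); the cancellation $\dot\chi=-k_d\sin\chi$ holds for the formula as derived in the cited source, which is what the theorem is attributed to. With that caveat acknowledged, I see no gap in your argument.
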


The first term in (\ref{eq: ui}) makes the aircraft to stay on the guidance vector field (\ref{eq: gvf}) while the second term makes the vehicle to converge to the guidance vector field in case that the vehicle is not aligned with it.

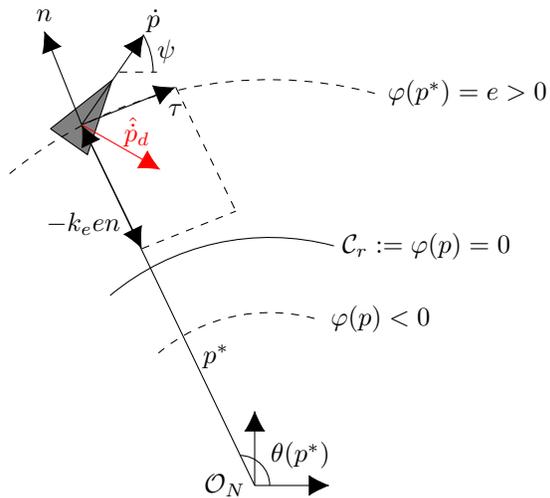
\begin{figure}
	\begin{tikzpicture}
	\draw[fill=gray,cm={cos(-55),-sin(-55),sin(-55),cos(-55),(1.15,0)}](0,1.3)--(0,0.7)--(1,1)--(0,1.3);
	\draw[dashed] (3,-4) ++(75:5.4) arc (75:130:5.4) node[at start, xshift=35]{$\varphi(p^*) = e > 0$};
		\draw[dashed] (3,-4) ++(75:2.3) arc (75:130:2.3) node[at start, xshift=25]{$\varphi(p) < 0$};
		\draw (3,-4) ++(75:3.3) arc (75:130:3.3) node[at start, xshift=35]{$\mathcal{C}_r:=\varphi(p) = 0$};;
		\draw (3,-4) arc (0:90:0.4) node[xshift=23]{$\theta(p^*)$};;
	\draw[-{Latex[length=8, width=8]}] (2.8, -4) -- (3.8, -4);
		\draw[-{Latex[length=8, width=8]}] (2.8, -4) node[left]{$\mathcal{O}_N$} -- (2.8, -3);
	\draw[-{Latex[length=8, width=8]}] (2.8, -4) -- (0.5,0.8) node[pos=0.3, above, xshift=5]{$p^*$};
	\draw[-{Latex[length=8, width=8]}] (0.5,0.8) -- (1.75, 1.3) node[pos=1, below,yshift=-3]{$\tau$};
	\draw[-{Latex[length=8, width=8]}] (0.5,0.8) -- (0, 2.05) node[pos=1, above]{$n$};
	\draw[-{Latex[length=8, width=8]}] (0.5,0.8) -- (1.3, -0.85) node[pos=0.8, left]{$-k_een$};
		\draw[-{Latex[length=8, width=8]}, color=red] (0.5,0.8) -- (3.05-1.5, 1.3-1.1) node[pos=0.7, above]{$\hat{\dot p}_d$};
	\draw[-{Latex[length=8, width=8]}] (0.5,0.8) -- (1.33,2) node[pos=1.15]{$\dot p$};
	\draw[dashed] (1, 1.5) -- (1.5, 1.5);
	\draw[dashed] (1.3, -0.85) -- (3.05-0.5, 1.3-0.85-0.8);
	\draw[dashed] (1.75, 1.3) -- (3.05-0.5, 1.3-0.85-0.8);
	\draw (1.45,1.5) arc (0:30:1) node[pos=-0, xshift=5, right, above]{$\psi$};
\end{tikzpicture}
	\caption{The direction to be followed by the UAV at the point $p^*$ for converging to $\mathcal{C}_r$ is given by $\hat{\dot p}_d$. The tangent and normal vectors $\tau$ and $n$ are calculated from $\nabla\varphi(p^*)$. The error \emph{distance} $e$ is calculated as $\varphi(p^*)$. All the circles $\varphi_c$ can be parametrized by an angle $\theta$ with respect to the horizontal axis of a frame of coordinates at the center of $\mathcal{C}_r$.}
	\label{fig: ilus}
\end{figure}

\section{Circular formation control}
\label{sec: cf}
\subsection{Problem definition}
Given the neighbors' relationship described by the graph $\mathcal{G}$, the stacked vector of inter-vehicle angles can be calculated as
\begin{equation}
z = B^T\theta,
\end{equation}
where $\theta\in\mathbb{T}^{|\mathcal{V}|}=\mathbb{S}^1\times\dots\times\mathbb{S}^1$ (the $n$-torus) is the stacked vector of parameters for each vehicle as in (\ref{eq: thetai}), and $z\in\mathbb{T}^{|\mathcal{E}|}$, therefore no necessarily all the inter-vehicle angles are calculated. Note that $z_k\in\mathbb{S}^1$ can be calculated from the relative measurement $p_i - p_j$ by just following trigonometric arguments in Figure \ref{fig: ilus}. Consider a collection of desired $z_k^*\in\mathbb{S}^1, k\in\{1,\dots,|\mathcal{E}|\}$ on the circle. We define the formation error $e_\theta\in\mathbb{T}^{|\mathcal{E}|}$ as the stacked vector of signals
\begin{equation}
	e_{\theta_k}(t) = z_k(t) - z_k^*,
	\label{eq: etheta}
\end{equation}
where $e_{\theta_k}\in\mathbb{S}^1$. The objective of the proposed algorithm for the team of fixed-wing UAVs in the next subsection is to achieve simultaneously $e_\theta(t) \to 0$ and $p_i(t)\to\mathcal{C}_r, \forall i\in\mathcal{N}$ as $t\to\infty$.

\subsection{Controller design and stability}
Consider that the unit speed aircraft $i$ is tracking correctly $\mathcal{C}_r$, therefore its angular velocity around the center of $\mathcal{C}_r$ is
\begin{equation}
	\dot\theta_i = \frac{1}{r}.
	\label{eq: 1r}
\end{equation}
The idea is to control the inter-vehicle angles in $z$ by changing in the vehicles the desired trajectory to be tracked, i.e., instead of (\ref{eq: cr}), the vehicle $i$ has to track
\begin{equation}
	^i\mathcal{C}_{(r,{^ic})} \dfb \{p \,:\, \varphi(p) =  {^ic}\},
\label{eq: cru}
\end{equation}
where ${^ic}\in\R$ is the formation control signal to be designed and the superindex $i$ denotes for the vehicle $i\in\mathcal{V}$. Note that the smaller the ${^ic}$ (possibly negative), the bigger the radius of $^i\mathcal{C}_{(r,{^ic})}$, and thus the smaller the angular velocity $\dot\theta_i$. For the sake of simplicity in the following analysis we define
\begin{equation}
	^ic \dfb {^iu_r^2}+2r\,{^iu_r},
	\label{eq: iciu}
\end{equation}
where ${^iu_r}\in\R$ is a control action with a more straightforward physical meaning than $^ic$, i.e., we set the radius (around the desired $r$) of the circumference $^i\mathcal{C}_{(r,{^ic})}$ (or simply $^i\mathcal{C}$) because
\begin{equation}
	x^2 + y^2 - r^2 = {^iu_r^2}+2r\,{^iu_r} \iff x^2 + y^2 - (r+\,{^iu_r})^2 = 0.
\end{equation}
\begin{remark}
Note that for a generic closed trajectory, if a vehicle tracks it on a negative level set, then it travels less distance than one that tracks the same trajectory on a positive level set after one loop.
\end{remark}

We propose the following control action for achieving the desired circular formation
\begin{equation}
	^iu_r = k_r B_ie,
	\label{eq: ur}
\end{equation}
where $B_i$ stands for the $i$'th row of the incidence matrix $B$ as in (\ref{eq: B}), and $k_r\in\R^+$. Since $e_{\theta_k} \in (-\pi,\pi]$, we impose to $k_r$ the following condition
\begin{equation}
	r - \pi k_r\,\max\limits_{i\in\mathcal{V}}(|\mathcal{N}_i|) > 0,
\end{equation}
i.e., we avoid the possibility of setting a negative radius\footnote{Note that while a level set can be negative, the radius of a circumference cannot.} in $^i\mathcal{C}$. Note that the control action (\ref{eq: ur}) is based on the popular consensus algorithm in formation control \cite{oh2015survey}.

Before presenting the main result, we need the following technical lemma that will define the neighbors' relationships.
\begin{lemma}
	If $\mathcal{G}$ does not contain any cycles, then the matrix $A\dfb -B^TB$ is Hurwitz.
	\label{lem: bb}
\end{lemma}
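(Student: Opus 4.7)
The plan is to exploit that $A=-B^TB$ is symmetric, so its eigenvalues are real, and negative semidefinite by construction since $x^TAx = -\|Bx\|^2\leq 0$ for every $x\in\R^{|\mathcal{E}|}$. Hence showing that $A$ is Hurwitz reduces to proving that $A$ is strictly negative definite, which in turn is equivalent to $B$ having full column rank, i.e., $\ker B=\{0\}$.

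The key step I would carry out is to translate the acyclicity of $\mathcal{G}$ into the linear independence of the columns of $B$. I would argue by contradiction: suppose there exists a nonzero $x\in\R^{|\mathcal{E}|}$ with $Bx=0$, and let $S\dfb\{k\in\{1,\dots,|\mathcal{E}|\}:x_k\neq 0\}$. Consider the subgraph $\mathcal{H}$ of $\mathcal{G}$ formed by the edges indexed by $S$. Because $\mathcal{G}$ has no cycles, $\mathcal{H}$ is a nonempty forest, and every nonempty forest contains at least one leaf, i.e., a vertex $v$ of degree one in $\mathcal{H}$. Let $k^\star\in S$ be the unique edge of $\mathcal{H}$ incident to $v$; from (\ref{eq: B}) the $v$-th row of $Bx$ reduces to $b_{vk^\star}x_{k^\star}=\pm x_{k^\star}$. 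Since $Bx=0$ forces this scalar to vanish, we obtain $x_{k^\star}=0$, contradicting $k^\star\in S$. Therefore $\ker B=\{0\}$ and $B$ has full column rank.

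Combining the two observations, $B^TB$ is symmetric positive definite, so every eigenvalue of $A=-B^TB$ is a strictly negative real number, which is exactly the Hurwitz condition. I expect the only non-trivial part of the argument to be the leaf-extraction step above; everything else is bookkeeping for incidence matrices.
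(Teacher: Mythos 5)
Your proof is correct and follows essentially the same route as the paper: reduce the Hurwitz property to positive definiteness of $B^TB$, which holds iff $B$ has trivial kernel when $\mathcal{G}$ is acyclic. The only difference is that the paper cites \cite{guattery2000graph} for the fact that acyclicity implies $Bx\neq 0$ for nonzero $x$, whereas you supply a self-contained (and correct) leaf-extraction argument for it.
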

\begin{proof}
	If $\mathcal{G}$ does not contain any cycles, it has been show in \cite{guattery2000graph} that $Bx \neq 0$ for any nonzero vector $x\in\R^{|\mathcal{E}|}$. Note that $||Bx||^2 = x^TB^TBx>0$, implying that $B^TB$ is positive definite. Hence, $A$ is Hurwitz if $\mathcal{G}$ does not contain any cycles.
\end{proof}

We also make use of the following assumption.
\begin{assumption}
	A vehicle $i$ is always tracking and traveling over $^i\mathcal{C}$ as in (\ref{eq: cru}).
	\label{as: tr}
\end{assumption}

The Assumption \ref{as: tr} considers that if there is a change in the radius of $^i\mathcal{C}$, then the vehicle instantaneously \emph{jumps} to the required level set. As we will show, the circular formation controller (\ref{eq: ur}) guarantees the exponential stability of the origin of the error signal $e_\theta$ under the Assumption \ref{as: tr}. Since the trajectory error tracking $e$ in Theorem \ref{th: tr} is also locally exponentially stable, one may consider a \emph{slow-fast} dynamics in cascade by tuning appropriately the gains $k_e$ and $k_r$ in (\ref{eq: gvf}) and (\ref{eq: ur}) respectively \cite[Chapter 4]{sepulchre2012constructive}. Informally, the controller (\ref{eq: ui}) provides a \emph{fast} transient process of the vehicle to $^i\mathcal{C}$ if the aircraft is sufficiently close to it, while the whole formation \emph{slowly} follows the formation controller (\ref{eq: ur}). In fact, if $k_r$ is sufficiently small, the set of possible trajectories $^i\mathcal{C}$ will be very close to $\mathcal{C}_r$, therefore making reasonable the Assumption \ref{as: tr}. Furthermore, since for circles the convergence of the algorithm in Theorem \ref{th: tr} is almost globally stable (with the exception of starting at the center of $\mathcal{C}_r$), even if the vehicles start far away from the trajectories to be tracked, they will approach a situation where Assumption \ref{as: tr} can eventually be considered.

\begin{theorem}
	Consider a team of $n$ unit speeds aircraft modeled as in (\ref{eq: pdyn}), and the graph $\mathcal{G}$ defining the neighbors' relationships does not contain any cycles. All the vehicles are tracking (\ref{eq: cru}) by employing (\ref{eq: ui}). If Assumption \ref{as: tr} holds and the level sets $^ic$ in (\ref{eq: cru}) are controlled by (\ref{eq: ur}) via (\ref{eq: iciu}), then the origin of the error $e_\theta$ as in (\ref{eq: etheta}) is locally exponentially stable for the desired $z^*_k,k\in\{1,\dots,|\mathcal{E}|\}$.
	\label{th: fc}
\end{theorem}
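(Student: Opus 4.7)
The plan is to use Assumption~\ref{as: tr} to collapse the vehicle kinematics onto the scalar angle $\theta_i$, and then to linearise the resulting closed-loop error dynamics about $e_\theta = 0$, where the Jacobian will coincide (up to a positive scalar) with the matrix $A$ of Lemma~\ref{lem: bb}. Under Assumption~\ref{as: tr} a unit-speed vehicle on $^i\mathcal{C}$ satisfies the natural generalisation of (\ref{eq: 1r}),
\begin{equation*}
\dot\theta_i \;=\; \frac{1}{r + {^iu_r}},
\end{equation*}
and since $z^*$ is constant, differentiating $e_\theta = B^T\theta - z^*$ and substituting the controller (\ref{eq: ur}) yields a closed-loop ODE of the form $\dot e_\theta = B^T F(k_r B\,e_\theta)$ with $F_i(u_r) \dfb 1/(r + {^iu_r})$.

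Next I would verify that $e_\theta = 0$ is an equilibrium of this ODE. At $e_\theta = 0$ every component of $u_r$ vanishes, all angular velocities equal $1/r$, and $\dot e_\theta|_{0} = (1/r)\,B^T \mathbf{1} = 0$ because every column of an incidence matrix sums to zero. Computing the Jacobian of the right-hand side at the origin via the chain rule, with $\partial F_i/\partial\,{^iu_r}|_0 = -1/r^2$, gives the linearisation
\begin{equation*}
\dot e_\theta \;\approx\; -\frac{k_r}{r^2}\,B^T B\,e_\theta \;=\; \frac{k_r}{r^2}\,A\,e_\theta.
\end{equation*}
Since $\mathcal{G}$ contains no cycles, Lemma~\ref{lem: bb} guarantees that $A$ is Hurwitz; multiplying by the positive scalar $k_r/r^2$ preserves this, and Lyapunov's indirect method then delivers local exponential stability of the origin of $e_\theta$, which is the desired conclusion.

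The main obstacle is really conceptual and already lives in the design: choosing to actuate the radius $u_r$ rather than the yaw $u_\psi$ is what makes the problem tractable, because once Assumption~\ref{as: tr} is granted, the error dynamics reduce to a consensus-type flow on an acyclic incidence structure where linear algebra does the rest. The two technical points I would address carefully are that $e_{\theta_k}\in\mathbb{S}^1$, which is handled by working in a local chart around $0$ (legitimate since only local stability is claimed), and that the uniform drift term $(1/r)\mathbf{1}$ in $\dot\theta$ must vanish under $B^T$ for $e_\theta = 0$ to be an equilibrium at all; this is the sole place where the classical identity $B^T\mathbf{1} = 0$ is essential. Everything else is a routine linearisation plus an invocation of Lemma~\ref{lem: bb}.
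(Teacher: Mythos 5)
Your proposal is correct and follows essentially the same route as the paper: under Assumption~\ref{as: tr} you reduce each vehicle to the angular rate $1/(r+{^iu_r})$, obtain the closed-loop error dynamics through $B^T$, and linearise at the origin to get $\dot\epsilon_\theta = \frac{k_r}{r^2}A\epsilon_\theta$ with $A$ Hurwitz by Lemma~\ref{lem: bb}. The paper simply writes the same computation edge by edge (as an explicit difference of the two angular rates for each edge) rather than in your compact vector form $B^TF(k_rBe_\theta)$, and your explicit check of the equilibrium via $B^T\mathbf{1}=0$ is the cancellation the paper performs implicitly.
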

\begin{proof}
	The proof is based on checking the stability of the linearization of the error dynamics $e_\theta$ around the origin. First note that $\dot e_\theta = \dot z = B^T\dot\theta$. According to Assumption \ref{as: tr}, we also have that for each edge $\mathcal{E}_k = (i,j)$ the agents $i$ and $j$ are tracking a circle of radius $r+k_rB_ie_\theta$ and $r+k_rB_je_\theta$ respectively, so from (\ref{eq: 1r}) it holds that
\begin{align}
	\dot e_{\theta_k} &= \dot z_k = \frac{1}{r+k_rB_ie_\theta} - \frac{1}{r+k_rB_je_\theta} \nonumber \\
	& = \frac{k_r(B_j - B_i)e_\theta}{(r+k_rB_ie_\theta)(r+k_rB_je_\theta)} \nonumber \\
	& = \frac{k_rA_ke_\theta}{(r+k_rB_ie_\theta)(r+k_rB_je_\theta)}, \quad k\in\{1,\dots,|\mathcal{E}|\},
	\label{eq: ekdyn}
\end{align}
	where $A_k$ is the $k$'th row of the matrix $A$ as in Lemma \ref{lem: bb}. We linearize (\ref{eq: ekdyn}) around $e_\theta = 0$, therefore the dynamics of small variations $\epsilon_{\theta_k}$ of the error are given by
	\begin{align}
		\dot\epsilon_{\theta_k} = \frac{\partial \dot e_{\theta_k}}{\partial e_{\theta}}\Big|_{e_\theta = 0} \epsilon_{\theta} = \frac{k_r}{r^2}A_k\epsilon_{\theta},
	\end{align}
which leads to the following compact form
\begin{equation}
\dot\epsilon_\theta =  \frac{k_r}{r^2}A\epsilon_\theta,
\end{equation}
and because $A$ is Hurwitz according to Lemma \ref{lem: bb} since $\mathcal{G}$ has not any cycles, we can conclude that the equilibrium $e_\theta = 0$ is locally exponentially stable.
\end{proof}
\begin{remark}
	Note that since the convergence to the trajectories $^i\mathcal{C}$ is asymptotic, one can guarantee that all the vehicles will be confined in a disc $\mathcal{D}$ of radius $(r+\pi k_r\,\max\limits_{i\in\mathcal{V}}(|\mathcal{N}_i|)$, which corresponds to the worst case radius to be tracked, for all time $t$, even if the control $u_r$ is not updated, e.g., the vehicles are not exchanging or sensing their positions.
\end{remark}

It is interesting to highlight that if $\mathcal{G}$ does not contain any cycles, then in such a disc $\mathcal{D}$ the only equilibrium point for the system is at $e_\theta = 0$, which has been proven stable. Since the vehicles are eventually confined in $\mathcal{D}$ according to Theorem \ref{th: tr}, it seems reasonable to conjecture that an estimation of the region of attraction for the exponentially stable $e_\theta = 0$ is indeed $\mathcal{D}$. Furthermore, for a proof of the convergence of the overall system without Assumption \ref{as: tr}, one can use the stability theory of cascade systems \cite[Chapter 4]{sepulchre2012constructive}, while the exponential stability of the partial system (\ref{eq: ekdyn}) could guarantee the (locally) asymptotic stability of the overall system. We will present a rigorous proof in the extended journal version.

\section{Implementation and flight performance}
\label{sec: exp}
\subsection{Experimental platform}
The validity of Theorem \ref{th: fc} has been tested with the three fixed-wing UAVs shown in Figure \ref{fig: 3uav}. The aircraft have about $600$ grams of weight, $1.2$ m of wingspan, and they are actuated by two elevons and one motor. The electronics include a battery that allows about $45$ minutes of autonomy at the nominal flight, which corresponds to an airspeed of $13$ m/s. The chosen board for running the Paparazzi autopilot stack is the Apogee \cite{papa}, which includes the usual sensors of three axis gyros, accelerometers, magnetometers. Each fixed-wing UAV has on board an U-Blox GPS with a nominal accuracy of $3$ meters in the horizontal plane. The airplanes exchange their positions according to $\mathcal{G}$, so they can compute the corresponding inter-vehicle angles $z_k$. The source code can be checked online at the Paparazzi repository \cite{papa}.

\begin{figure}
\centering
\includegraphics[width=1\columnwidth]{./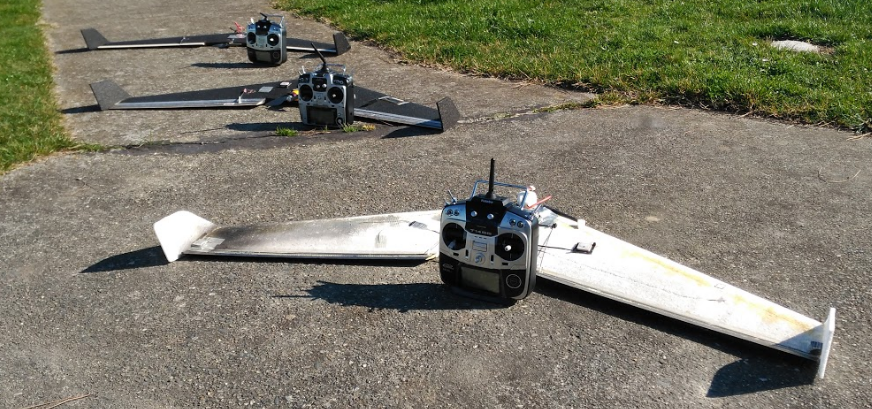}
	\caption{The three fixed-wing UAVs employed for the circular formation at the aero club of Eole at Muret (Toulouse).}
\label{fig: 3uav}
\end{figure}
\subsection{Circular formation flight experiment}
The formation flight\footnote{The video of the experiment with HD quality can be watched at https://www.youtube.com/c/HectorGarciadeMarina .} has been taken place at the aero model club of Eole at Muret, close to the city of Toulose in France. The flights were performed on the 17th of February, 2017 between the 10:00 and the 12:00 hours local time. The wind coming from the south was about $2$ m/s according to MeteoFrance, therefore we can consider that ground-speed and airspeed are approximately equal. The airplanes $1,2$ and $3$ are tagged with the colors blue, pink and red respectively at the ground station captions in Figure \ref{fig: gs}. The chosen incidence matrix for the communication between vehicles is
\begin{equation}
	B = \begin{bmatrix}
		1 & 0 \\
		-1 & 1 \\
		0 & -1
	\end{bmatrix},
\end{equation}
which clearly does not define any cycles. The desired formation is defined by $z_1^*=z_2^*=0$, i.e., all the aircraft achieve consensus for their corresponding $\theta_i$. Potential collisions are avoided by making the airplanes to fly at different altitudes, which are $30, 32$ and $35$ meters above the ground for planes $1, 2$ and $3$ respectively. The target circle $\mathcal{C}_r$ is set at the waypoint CIRCLE in Figure \ref{fig: gs} with a desired radius $r$ equal to $80$ meters. The gains $k_d, k_e$ and $k_r$ in Theorems \ref{th: tr} and \ref{th: fc} have been set to $1, 1$ and $8$ respectively. The airplanes exchange their positions with a frequency of $2$ Hz, although lost communications are expected. Note that each airplane has a different understanding of $e_{\theta_{\{1,2\}}}$, i.e., each airplane calculates on board the error signal and it might be different among neighbors due to lost communications or GPS delays. Interesting work studying the effects of this fact in formation control can be found at \cite{mou2016undirected,iros2017}.

Before the formation control algorithm begins the three aircraft are orbiting at different standby points. The experiment starts at time $131$ seconds in Figure \ref{fig: gs}. At this moment the algorithm commands the airplane $2$ (red) to follow a circumference with a much smaller radius than airplanes $1$ (blue) and $3$ (pink) in order \emph{to catch them up}. In fact, airplanes $1$ and $3$ are tracking a circumference with a bigger radius than $\mathcal{C}_r$ in order \emph{to wait} for airplane $2$. In about $15$ seconds, the errors $e_{\theta_{\{1,2\}}}$ have been reduced half. Some lost communications have been experienced between times $150$ and $170$ seconds. However, the algorithm seems robust against such an issue and the formation achieves consensus within a band of $\pm10$ degrees of error, and continues stable until the end of the experiment after seven laps.

\begin{figure*}
\centering
\begin{subfigure}{.49\columnwidth}
  \centering
  \includegraphics[width=\linewidth]{./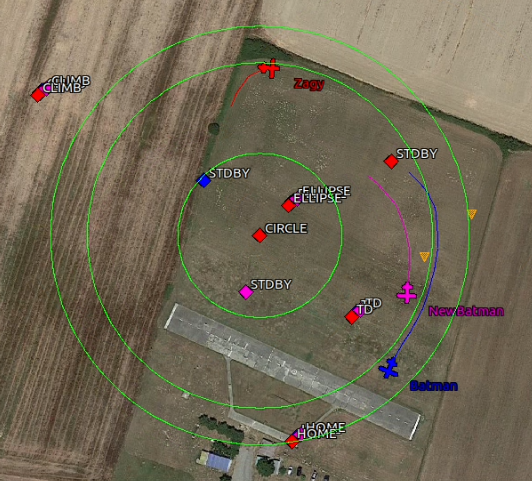}
  \caption{t = $131$secs}
\end{subfigure}
\begin{subfigure}{.49\columnwidth}
  \centering
  \includegraphics[width=\linewidth]{./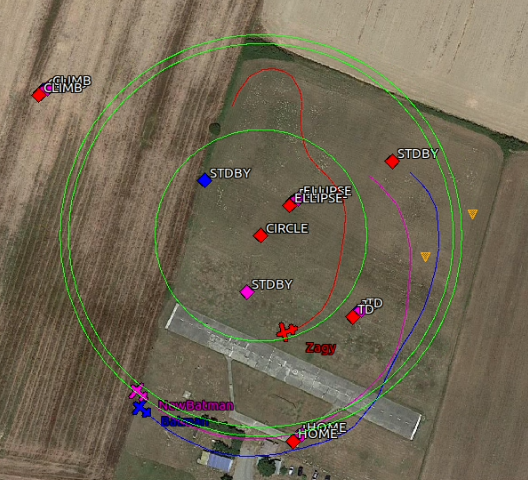}
   \caption{t = $145$secs}
\end{subfigure}
\begin{subfigure}{.49\columnwidth}
  \centering
  \includegraphics[width=\linewidth]{./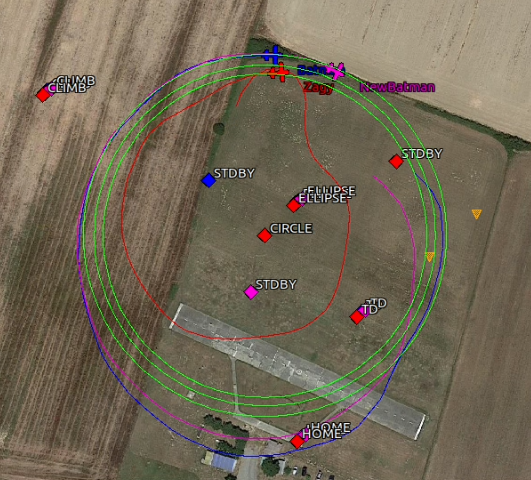}
   \caption{t = $162$secs}
\end{subfigure}
\begin{subfigure}{.49\columnwidth}
  \centering
  \includegraphics[width=\linewidth]{./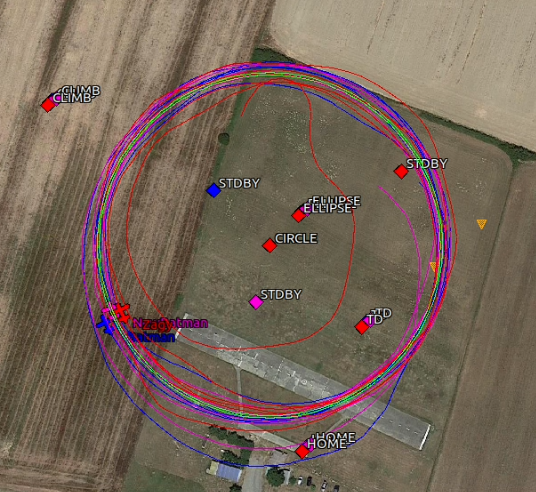}
	\caption{t = $410$secs} 
\end{subfigure}%
\caption{Screenshots from the Paparazzi ground control station showing the evolution of the circular formation.}
\label{fig: gs}
\end{figure*}

\begin{figure}
\centering
\includegraphics[width=1\columnwidth]{./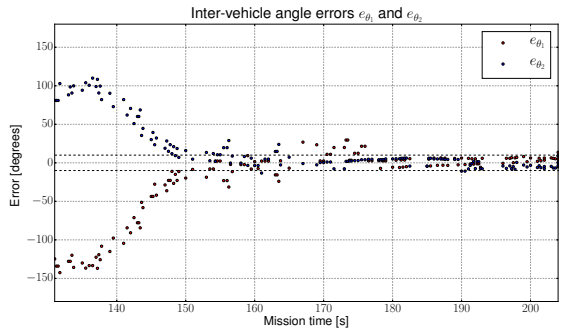}
	\caption{Inter-vehicle error angle evolution. The zone $\pm10$ degrees is between the dashed lines. Each dot represents the calculated error by one of the airplanes once it receives the position of its neighbor. The density of points is different due to lost communications. However, the algorithm seems robust enough against such an issue as it drives both errors close to zero within a band $\pm10$ degrees.} 
\label{fig: error}
\end{figure}

\section{Conclusions}
	This paper has presented an algorithm for achieving circular formations with fixed-wing UAVs traveling with constant speeds. The strategy consists in controlling the angular velocities around the center of the desired circle. For that, we design a control action that is applied to the level set to be tracked around the desired circle. These level sets are tracked by a guidance vector field algorithm that is locally exponentially stable. Since the presented algorithm for circular formations is also exponentially stable if the vehicles are perfectly tracking the level sets, we employ the argument of \emph{slow-fast} systems in cascade in order to show the compatibility of both algorithms, which has been demonstrated in practice with three aircraft. The algorithm can be potentially extended to other closed non-circle trajectories. A more rigorous analysis will be presented in an extension of this work.



\bibliographystyle{IEEEtran}
\bibliography{hector_ref.bib}

\begin{thebibliography}{10}
\providecommand{\url}[1]{#1}
\csname url@rmstyle\endcsname
\providecommand{\newblock}{\relax}
\providecommand{\bibinfo}[2]{#2}
\providecommand\BIBentrySTDinterwordspacing{\spaceskip=0pt\relax}
\providecommand\BIBentryALTinterwordstretchfactor{4}
\providecommand\BIBentryALTinterwordspacing{\spaceskip=\fontdimen2\font plus
\BIBentryALTinterwordstretchfactor\fontdimen3\font minus
  \fontdimen4\font\relax}
\providecommand\BIBforeignlanguage[2]{{%
\expandafter\ifx\csname l@#1\endcsname\relax
\typeout{** WARNING: IEEEtran.bst: No hyphenation pattern has been}%
\typeout{** loaded for the language `#1'. Using the pattern for}%
\typeout{** the default language instead.}%
\else
\language=\csname l@#1\endcsname
\fi
#2}}

\bibitem{beard2006decentralized}
R.~W. Beard, T.~W. McLain, D.~B. Nelson, D.~Kingston, and D.~Johanson,
  ``Decentralized cooperative aerial surveillance using fixed-wing miniature
  {UAVs},'' \emph{Proceedings of the IEEE}, vol.~94, no.~7, pp. 1306--1324,
  2006.

\bibitem{goodrich2008supporting}
M.~A. Goodrich, B.~S. Morse, D.~Gerhardt, J.~L. Cooper, M.~Quigley, J.~A.
  Adams, and C.~Humphrey, ``Supporting wilderness search and rescue using a
  camera-equipped mini {UAV},'' \emph{Journal of Field Robotics}, vol.~25, no.
  1-2, pp. 89--110, 2008.

\bibitem{tokekar2016sensor}
P.~Tokekar, J.~Vander~Hook, D.~Mulla, and V.~Isler, ``Sensor planning for a
  symbiotic {UAV} and {UGV} system for precision agriculture,'' \emph{IEEE
  Transactions on Robotics}, vol.~32, no.~6, pp. 1498--1511, 2016.

\bibitem{olfati2007consensus}
R.~Olfati-Saber, J.~A. Fax, and R.~M. Murray, ``Consensus and cooperation in
  networked multi-agent systems,'' \emph{Proceedings of the IEEE}, vol.~95,
  no.~1, pp. 215--233, 2007.

\bibitem{ryan2004overview}
A.~Ryan, M.~Zennaro, A.~Howell, R.~Sengupta, and J.~K. Hedrick, ``An overview
  of emerging results in cooperative {UAV} control,'' in \emph{Decision and
  Control, 2004. CDC. 43rd IEEE Conference on}, vol.~1.\hskip 1em plus 0.5em
  minus 0.4em\relax IEEE, 2004, pp. 602--607.

\bibitem{bertuccelli2004robust}
L.~Bertuccelli, M.~Alighanbari, and J.~How, ``Robust planning for coupled
  cooperative {UAV} missions,'' in \emph{Decision and Control, 2004. CDC. 43rd
  IEEE Conference on}, vol.~3.\hskip 1em plus 0.5em minus 0.4em\relax IEEE,
  2004, pp. 2917--2922.

\bibitem{wang2007cooperative}
X.~Wang, V.~Yadav, and S.~Balakrishnan, ``Cooperative {UAV} formation flying
  with obstacle/collision avoidance,'' \emph{IEEE Transactions on control
  systems technology}, vol.~15, no.~4, pp. 672--679, 2007.

\bibitem{stipanovic2004decentralized}
D.~M. Stipanovi{\'c}, G.~Inalhan, R.~Teo, and C.~J. Tomlin, ``Decentralized
  overlapping control of a formation of unmanned aerial vehicles,''
  \emph{Automatica}, vol.~40, no.~8, pp. 1285--1296, 2004.

\bibitem{oh2015survey}
K.-K. Oh, M.-C. Park, and H.-S. Ahn, ``A survey of multi-agent formation
  control,'' \emph{Automatica}, vol.~53, pp. 424--440, 2015.

\bibitem{anderson2008uav}
B.~D.~O. Anderson, B.~Fidan, C.~Yu, and D.~Walle, ``{UAV} formation control:
  Theory and application,'' in \emph{Recent advances in learning and
  control}.\hskip 1em plus 0.5em minus 0.4em\relax Springer, 2008, pp. 15--33.

\bibitem{montenbruck2016fekete}
J.~M. Montenbruck, D.~Zelazo, and F.~Allg{\"o}wer, ``Fekete points, formation
  control, and the balancing problem,'' \emph{arXiv preprint arXiv:1606.08203},
  2016.

\bibitem{lee2016distributed}
B.-H. Lee and H.-S. Ahn, ``Distributed formation control via global orientation
  estimation,'' \emph{Automatica}, vol.~73, pp. 125--129, 2016.

\bibitem{lin2005necessary}
Z.~Lin, B.~Francis, and M.~Maggiore, ``Necessary and sufficient graphical
  conditions for formation control of unicycles,'' \emph{IEEE Transactions on
  automatic control}, vol.~50, no.~1, pp. 121--127, 2005.

\bibitem{dimarogonas2007rendezvous}
D.~V. Dimarogonas and K.~J. Kyriakopoulos, ``On the rendezvous problem for
  multiple nonholonomic agents,'' \emph{IEEE Transactions on automatic
  control}, vol.~52, no.~5, pp. 916--922, 2007.

\bibitem{marshall2006pursuit}
J.~A. Marshall, M.~E. Broucke, and B.~A. Francis, ``Pursuit formations of
  unicycles,'' \emph{Automatica}, vol.~42, no.~1, pp. 3--12, 2006.

\bibitem{el2013distributed}
M.~I. El-Hawwary and M.~Maggiore, ``Distributed circular formation
  stabilization for dynamic unicycles,'' \emph{IEEE Transactions on Automatic
  Control}, vol.~58, no.~1, pp. 149--162, 2013.

\bibitem{leonard2010coordinated}
N.~E. Leonard, D.~A. Paley, R.~E. Davis, D.~M. Fratantoni, F.~Lekien, and
  F.~Zhang, ``Coordinated control of an underwater glider fleet in an adaptive
  ocean sampling field experiment in monterey bay,'' \emph{Journal of Field
  Robotics}, vol.~27, no.~6, pp. 718--740, 2010.

\bibitem{anderson2010fundamentals}
J.~D. Anderson~Jr, \emph{Fundamentals of aerodynamics}.\hskip 1em plus 0.5em
  minus 0.4em\relax Tata McGraw-Hill Education, 2010.

\bibitem{justh2004equilibria}
E.~W. Justh and P.~Krishnaprasad, ``Equilibria and steering laws for planar
  formations,'' \emph{Systems \& control letters}, vol.~52, no.~1, pp. 25--38,
  2004.

\bibitem{sepulchre2007stabilization}
R.~Sepulchre, D.~A. Paley, and N.~E. Leonard, ``Stabilization of planar
  collective motion: All-to-all communication,'' \emph{IEEE Transactions on
  Automatic Control}, vol.~52, no.~5, pp. 811--824, 2007.

\bibitem{sepulchre2008stabilization}
------, ``Stabilization of planar collective motion with limited
  communication,'' \emph{IEEE Transactions on Automatic Control}, vol.~53,
  no.~3, pp. 706--719, 2008.

\bibitem{seyboth2014collective}
G.~S. Seyboth, J.~Wu, J.~Qin, C.~Yu, and F.~Allg{\"o}wer, ``Collective circular
  motion of unicycle type vehicles with non-identical constant velocities,''
  \emph{IEEE Transactions on Control of Network Systems}, vol.~1, no.~2, pp.
  167--176, 2014.

\bibitem{brinon2014cooperative}
L.~Bri{\~n}on-Arranz, A.~Seuret, and C.~C. De~Wit, ``Cooperative control design
  for time-varying formations of multi-agent systems,'' \emph{IEEE Transactions
  on Automatic Control}, vol.~59, no.~8, pp. 2283--2288, 2014.

\bibitem{sun2015collective}
Z.~Sun, G.~S. Seyboth, and B.~D.~O. Anderson, ``Collective control of multiple
  unicycle agents with non-identical constant speeds: Tracking control and
  performance limitation,'' in \emph{Control Applications (CCA), 2015 IEEE
  Conference on}.\hskip 1em plus 0.5em minus 0.4em\relax IEEE, 2015, pp.
  1361--1366.

\bibitem{wang2013forming}
C.~Wang, G.~Xie, and M.~Cao, ``Forming circle formations of anonymous mobile
  agents with order preservation,'' \emph{IEEE Transactions on Automatic
  Control}, vol.~58, no.~12, pp. 3248--3254, 2013.

\bibitem{bullo2009distributed}
F.~Bullo, J.~Cort{\'e}s, and S.~Martinez, \emph{Distributed control of robotic
  networks: a mathematical approach to motion coordination algorithms}.\hskip
  1em plus 0.5em minus 0.4em\relax Princeton University Press, 2009.

\bibitem{wang2014controlling}
C.~Wang, G.~Xie, and M.~Cao, ``Controlling anonymous mobile agents with
  unidirectional locomotion to form formations on a circle,''
  \emph{Automatica}, vol.~50, no.~4, pp. 1100--1108, 2014.

\bibitem{hattenberger2014using}
G.~Hattenberger, M.~Bronz, and M.~Gorraz, ``Using the paparazzi {UAV} system
  for scientific research,'' in \emph{IMAV 2014, International Micro Air
  Vehicle Conference and Competition 2014}, 2014, pp. pp--247.

\bibitem{kendoul2012survey}
F.~Kendoul, ``Survey of advances in guidance, navigation, and control of
  unmanned rotorcraft systems,'' \emph{Journal of Field Robotics}, vol.~29,
  no.~2, pp. 315--378, 2012.

\bibitem{sujit2014unmanned}
P.~Sujit, S.~Saripalli, and J.~B. Sousa, ``Unmanned aerial vehicle path
  following: A survey and analysis of algorithms for fixed-wing unmanned aerial
  vehicless,'' \emph{IEEE Control Systems}, vol.~34, no.~1, pp. 42--59, 2014.

\bibitem{YuriCS}
\BIBentryALTinterwordspacing
Y.~A. Kapitanyuk, A.~V. Proskurnikov, and M.~Cao, ``A guiding vector field
  algorithm for path following control of nonholonomic mobile robots,'' 2016,
  iEEE Transactions on Control System Technology, accepted. [Online].
  Available: \url{https://arxiv.org/abs/1610.04391}
\BIBentrySTDinterwordspacing

\bibitem{de2016guidance}
H.~G. de~Marina, Y.~A. Kapitanyuk, M.~Bronz, G.~Hattenberger, and M.~Cao,
  ``Guidance algorithm for smooth trajectory tracking of a fixed wing uav
  flying in wind flows,'' in \emph{Accepted in the 2017 IEEE International
  Conference on Robotics and Automation (ICRA)}, 2017.

\bibitem{guattery2000graph}
S.~Guattery and G.~L. Miller, ``Graph embeddings and laplacian eigenvalues,''
  \emph{SIAM Journal on Matrix Analysis and Applications}, vol.~21, no.~3, pp.
  703--723, 2000.

\bibitem{sepulchre2012constructive}
R.~Sepulchre, M.~Jankovic, and P.~V. Kokotovic, \emph{Constructive nonlinear
  control}.\hskip 1em plus 0.5em minus 0.4em\relax Springer Science \& Business
  Media, 2012.

\bibitem{papa}
\BIBentryALTinterwordspacing
Paparazzi. {UAV} open-source project. [Online]. Available:
  \url{http://wiki.paparazziuav.org/}
\BIBentrySTDinterwordspacing

\bibitem{mou2016undirected}
S.~Mou, M.-A. Belabbas, A.~S. Morse, Z.~Sun, and B.~D.~O. Anderson,
  ``Undirected rigid formations are problematic,'' \emph{IEEE Transactions on
  Automatic Control}, vol.~61, no.~10, pp. 2821--2836, 2016.

\bibitem{iros2017}
H.~G.~de Marina, J.~Siemonsma, B.~Jayawardhana, and M.~Cao, ``Design and
  implementation of formation control algorithms for fully distributed
  multi-robot systems,'' in \emph{Intelligent Robots and Systems (IROS), 2017
  IEEE/RSJ International Conference on, Submitted}, 2017.

\end{thebibliography}

\end{document}